\newcommand {\ri} {\rightarrow}
\newcommand {\ent} {\mathrel{{\scriptstyle\mid\!\sim}}}
\newcommand {\sx} {\langle}
\newcommand {\dx} {\rangle}
\newcommand {\emme} {\mathcal{M}}
\newcommand {\WW} {\mathcal{W}}
\newcommand{\tip}{{\bf T}}
\newcommand{\el}{\mathcal{EL}^{\bot}}
\newcommand{\be}{\begin{enumerate}}
\newcommand{\ee}{\end{enumerate}}
\newcommand{\hide}[1]{}
\renewcommand\UrlFont\itshape
\newtheorem{lemma}{Lemma}
\newtheorem{proposition}{Proposition}
\newtheorem{definition}{Definition}
\newtheorem{example}{Example}
\title{A framework for Conditional Reasoning \\ in Answer Set Programming}
\author{Mario Alviano
\institute{DEMACS, University of Calabria, Italy}
\email{mario.alviano@unical.it}
\and
Laura Giordano \quad  Daniele Theseider Dupr{\'e}
\institute{DISIT, University of Piemonte Orientale, Italy}
    \email{laura.giordano@uniupo.it \quad dtd@uniupo.it}
}
\begin{document}

\maketitle

\begin{abstract}
In this paper we introduce a Conditional Answer Set Programming framework (Conditional ASP) for the definition of conditional extensions of Answer Set Programming (ASP).  
The approach builds on a conditional logic with typicality, and on the combination of a conditional knowledge base with an ASP program, and allows for conditional reasoning over the answer sets of the program. 
The formalism relies on a multi-preferential semantics, 
and on the KLM preferential semantics, as a special case. 
Conditional entailment is encoded in ASP and a complexity upper-bound is provided.

\end{abstract}

\section{Introduction}

Preferential logics
\cite{Delgrande:87,Makinson88,Pearl:88,KrausLehmannMagidor:90,Pearl90,whatdoes,BenferhatIJCAI93,BoothParis98,Kern-Isberner01} have been proposed to provide axiomatic foundations of non-monotonic or defeasible reasoning, and to capture commonsense reasoning.
They have their roots in conditional logics \cite{Lewis:73,Nute80},
and allow for conditional statements (simply, {\em conditionals}) of the form $\alpha \ent \beta$, meaning that ``normally if $\alpha$ holds,  $\beta$ holds". 

{\em Answer set programming} (ASP) is one of the main computational logic paradigms. It is a formalism for declarative problem solving, allowing for the declarative specification of knowledge through a set of rules. 
The semantics of an ASP program is given by assigning it a collection of {\em stable models} \cite{Gelfond&Lifschitz:88,ASP_Gebser2012}, also called {\em answer sets}.

ASP has been used for encoding entailment in some preferential logics, specifically, for preferential extensions of low complexity Description Logics (DLs) of the $\el$ family \cite{rifel}. In particular, ASP has been used for reasoning in a ranked concept-wise extension of $\el$ with typicality \cite{TPLP2020}, 
for reasoning with many-valued weighted  conditional knowledge bases \cite{JLC_SI_CILC2023,IJAR23}, 
and also for defeasible reasoning 
based on the Rational Closure \cite{TPLP2016}.

In this paper, we follow a different path. We aim at combining conditional logic and  answer set programming by pairing a conditional knowledge base $K$ with an ASP program $\Pi$, 
with the aim of using the conditional logic in the verification of conditional properties of the ASP program. In the preferential semantics, a conditional knowledge base induces a preference relation between worlds (propositional interpretations);
when combined with an ASP program $\Pi$, the preference
relation is induced on the set of {\em answer sets} of $\Pi$,
and a preferential interpretation can be constructed from such a set.
Verifying {\em conditional properties of an ASP program}, given the associated conditional KB,
is especially of interest when $\Pi$ has many answer sets, and inspecting them is not viable.

In the literature many different semantics have been considered for conditional knowledge bases, and different closure constructions for strengthening preferential and rational entailment have been explored. Among them are the well known Rational Closure \cite{whatdoes} and Lexicographic Closure \cite{Lehmann95}. 
They exploit a single preference relation, while, in the paper, we explore an approach based on {\em multiple preference relations}, which has been considered in conditional description logics with typicality for ranked knowledge bases 
\cite{TPLP2020}, and for weighted (many-valued) knowledge bases \cite{JLC_SI_CILC2023,IJAR23}.

As in the Propositional Typicality Logic \cite{BoothCasiniAIJ19} (and in some DLs with typicality  \cite{FI09}), we adopt a language in which conditionals are formalized based on material implication plus the {\em typicality operator} $\tip$.
The typicality operator allows for the definition of {\em conditional implications} 
$\tip(\alpha) \rightarrow  \beta$, which correspond to the KLM conditionals $\alpha \ent \beta$ \cite{KrausLehmannMagidor:90,whatdoes}, meaning that ``in the typical situations in which $\alpha$ holds, $\beta$ also holds''.
For instance, the conditional implication:
\begin{quote}
$ \tip( \mathit{granted\_Loan}) \ri \mathit{living}\_\mathit{in}\_\mathit{town} \wedge \mathit{young}$
\end{quote}
means that, normally, if a loan is granted to a person, she lives in town and is young.
The typicality operator allows for more general implications of the form $\alpha \ri \beta$, 
where $\tip$ may occur in both $\alpha$ and $\beta$.
For instance, the converse implication
$\mathit{living}\_\mathit{in}\_\mathit{town} \wedge \mathit{young} \ri \tip( \mathit{granted\_Loan})$ 
means that living in town and being young implies that normally the loan is granted.

The preferential semantics of the logic exploits {\em multiple preference relations} $<_A$ with respect to different distinguished propositions $A$, following the approach developed for ranked and weighted KBs in description logics, based on a {\em multi-preferential semantics} \cite{TPLP2020,JLC_SI_CILC2023} 
and in some refinements of the KLM logics \cite{AIJ21}, where preferences are allowed with respect to different aspects.

In the paper, we let a {\em Conditional ASP (CondASP) program} to be a pair $(\Pi,K)$, where $\Pi$ is an ASP program and $K$ is a {\em  defeasible knowledge base}. We consider the special case where $K$ is a  
weighted KB, in which distinguished propositions $A_i$ are associated with a set of conditionals, each one with a weight, representing the plausibility/implausibility of a defeasible property.

For instance, consider the following example, in which defeasible properties of students and employees are specified through a set of weighted conditionals, as follows:
 \begin{quote}
$(d_1)$ $\mathit{\tip(employee) \ri young}$, \ \ -50  \ \ \ \ \ \ \ \ \  \ \ \ \ \ \ \ \ \ \ \ \ \ \ 
$(d_4)$ $\mathit{\tip(student) \ri young}$, \ \ 90

$(d_2)$ $\mathit{\tip(employee) \ri  has\_boss}$, \ \ 100 \ \ \ \ \ \ \ \ \ \ \ \ \ \ \ \ \ \ 
$(d_5)$ $\mathit{\tip(student) \ri   has\_boss}$, \ \ -80

$(d_3)$ $\mathit{\tip(employee) \ri   has\_classes}$, \ \ -70 \ \ \ \ \ \ \ \ \ \ \ \ \ \  
$(d_6)$ $\mathit{\tip(student) \ri   has\_classes}$, \ \  80

\end{quote}
Conditionals $(d_1 \-- d_3)$ describe properties of typical employees,
who have a boss, and are not likely to be young nor to have classes;
$(d_4 \-- d_6)$ describe properties of typical students.
Absolute values of weights correspond to degrees of (im)plausibility.
Consider a situation $S_1$ describing  a student and employee who is  young, has no boss and has classes, and a situation $S_2$ describing a student and employee who is not young, has a boss and has no classes.
Given the weights above, one would regard
$S_1$ as a more plausible situation for a student than $S_2$, i.e., $S_1 <_{\mathit{student}} S_2$;
and $S_2$ as a more plausible situation for an employee than $S_1$, i.e.,
$S_2 <_{\mathit{employee}} S_1$.

In the paper, after defining (Section \ref{sec:conditional_logic})
a {\em two-valued and multi-preferential conditional logic with typicality}, which is a generalization of KLM preferential semantics, 
we develop (Section \ref{sec:condASP}) a construction for defining, for a set of distinguished propositions $A_i$, preference relations $\leq_{A_i}$, preorders on the answer sets of the program.
In general, alternative constructions 
may be viable for defining the relations $\leq_{A_i}$, depending on the definition of the conditional knowledge base (whether it contains conditionals having a rank, or a weight, or none). 
Although the proposed approach provides a framework for conditional reasoning in ASP under different preferential interpretations of conditionals, the paper focuses on weighted KBs,
as the basis for the conditional extension of ASP.
An extended example of commonsense reasoning over a CondASP program is provided in Section \ref{sec:condASP}.

To deal with the verification of general conditionals  $\tip(A) \rightarrow  B$ over an CondASP program,   
the formalism is complemented with an approach for {\em combining preferences},
to define a preference relation $\leq_{A}$ with respect to a {\em boolean combination} $A$
of the distinguished propositions $A_i$.
This is discussed in Section \ref{sec:combining_preferences}, where
it is proved that 
entailment of conditional implications $\tip(A) \rightarrow  B$
from a CondASP program satisfies the {\em KLM properties} of preferential entailment \cite{KrausLehmannMagidor:90}.
Section \ref{sec:verif} provides an ASP approach for verifying
entailment from a CondASP program, which also provides a {\em complexity upper-bound} for conditional entailment.


\section{A Multi-Preferential Logic with Typicality} \label{sec:conditional_logic}

{In this section we define a two-valued preferential logic with typicality, which generalizes Kraus Lehmann and Magidor preferential semantics \cite{KrausLehmannMagidor:90,whatdoes}, 
by allowing for multiple preference relations (i.e., preferences with respect to multiple aspects), rather that a single preference relation.

Let ${\cal L}$ be a propositional logic, whose formulae are built from a set $Prop$ of propositional variables using the boolean connectives $\wedge$, $\vee$, $\neg$ and $\rightarrow$, as usual. 
We assume that $\bot$ (representing falsity) and $\top$ (representing truth) are formulae of ${\cal L}$.

We extend ${\cal L}$ by introducing a typicality operator as done in \cite{lpar2007} and \cite{BoothCasiniAIJ19} for the propositional calculus.
Intuitively, ``a sentence of the form $\tip(\alpha)$ is understood to refer to the {\em typical situations in which $\alpha$ holds}" \cite{BoothCasiniAIJ19}.
The typicality operator allows for the formulation of  {\em  conditional implications} (or {\em defeasible implications}) of the form 
$\tip(\alpha) \rightarrow \beta$
whose meaning is that ``normally, if $\alpha$ then $\beta$'', 
or ``in the typical situations when $\alpha$ holds, $\beta$ also holds''.
They correspond to conditional implications $\alpha \ent \beta$ of KLM preferential logics \cite{whatdoes}, but here we consider a preferential semantics which exploits multiple preferences. 
As in PTL \cite{BoothCasiniAIJ19}, the typicality operator cannot be nested.
When $\alpha$ and $\beta$ do not contain occurrences of the typicality operator, an implication $\alpha \rightarrow \beta$ is called {\em strict}. 
We call ${\cal L}^\tip$ the logic obtained by extending  ${\cal L}$ with a unary typicality operator $\tip$. 
In the logic ${\cal L}^\tip$,  {\em general implications}  $\alpha \rightarrow \beta$ are allowed, where $\alpha$ and $\beta$ may contain occurrences of the typicality operator.

As mentioned above, the interpretation of a typicality formula  $\tip(\alpha)$ is defined with respect to a {\em multi-preferential interpretation}.
The KLM preferential semantics \cite{KrausLehmannMagidor:90,whatdoes,Pearl:88} exploits a set of worlds $\WW$, with their valuation and a preference relation $<$ among worlds (where $w < w'$ means that world $w$ is more normal than world $w'$). 
A conditional $A\ent B$ is  satisfied in a KLM preferential interpretation, if $B$ holds in all the most normal worlds satisfying $A$, i.e., in all $<$-minimal worlds satisfying $A$.

Here, instead, we consider a multi-preferential semantics, where preference relations are associated with distinguished propositional variables $A_1, \ldots, A_n \in\mathit{Prop}$ (called {\em distinguished propositions} in the following). The idea is that how much a situation (a world) is normal (or less atypical) with respect to another one depends on the aspects considered for comparison.
The semantics exploits a set of preference relations $<_{A_i}$, each associated to a distinguished proposition $A_i$, where $w <_{A_i} w'$ means that world $w$ is less atypical than world $w'$ concerning aspect $A_i$.
For instance, referring to the example in the introduction, a world $w$ can be regarded as capturing a more plausible situation describing a student than world $w'$ (i.e., $w <_\mathit{student} w'$),
but a less plausible situation than $w'$ describing an employee (i.e., $w' <_\mathit{employee} w$).

Multi-preferential semantics have been previously considered for defining refinements of the rational closure construction \cite{AIJ21}, 
for defining preferences with respect to different modules of a conditional knowledge base \cite{NMR2020}, for defining a concept-wise semantics in ranked defeasible $\el$ knowledge bases \cite{TPLP2020}, and weighted conditionals in {\em many-valued} DLs with typicality \cite{IJAR23}.  Here, we deal with a {\em two-valued} multi-preferential logic with typicality.

Given a finite set of {\em distinguished propositions} $A_1, \ldots, A_n$, we let  $\leq_{A_i} \subseteq W\times W$ be the {\em preorder} associated with $A_i$ ($w \leq_{A_i} w'$ means that $w$ is not less plausible than $w'$). 
A strict partial order $<_{A_i}$ and an equivalence relation $\sim_{A_i}$ can be defined as usual from the preorder $\leq_{A_i}$ by letting 
$w <_{A_i} '$ iff $w \leq_{A_i} w'$ and  $w' \not \leq_{A_i} w$, and letting $w \sim_{A_i} w'$ iff $w \leq_{A_i} w'$ and  $w' \leq_{A_i} w$.

\subsection{Multi-preferential semantics}

In the following, we  limit our consideration to finite KBs, and restrict our attention to preferential interpretations with a finite set of preference relations $<_{A_i}$, one for each distinguished proposition  $A_i$. 
For the moment, we assume that, in any typicality formula $\tip(A)$, $A$ is a distinguished proposition.
In Section \ref{sec:combining_preferences} we will lift this restriction.

\begin{definition}\label{MPinterpretations}
A {\em (multi-)preferential  interpretation}  is a triple $\emme= \sx \WW, \{<_{A_i}\}, v \dx$ where:
\begin{quote} 
$\bullet$ \ $\WW$ is a non-empty set of worlds;

$\bullet$ \  each $<_{A_i} \subseteq \WW \times \WW$ is an irreflexive and transitive relation on $\WW$; 

$\bullet$ \ $v: \WW \longrightarrow 2^\mathit{Prop} $ is a valuation function, assigning to each world $w$ a set of propositional variables in $\mathit{Prop} $, the variables which are true in $w$.
\end{quote}
\end{definition} 
\noindent
A \em ranked interpretation} is a (multi-)preferential interpretation  $\emme= \sx \WW,\{<_{A_i}\}, v \dx$ for which all preference relations $<_{A_i}$ are {\em modular}, that is: for all $x, y, z$, if
$x < _{A_i} y$ then $x <_{A_i} z$ or $z <_{A_i} y$ (the preorder $\leq_{A_i}$ is total).
A relation $<_{A_i}$  is {\em well-founded} if it does not allow for infinitely descending chains of worlds $w_0, w_1, w_2, \ldots$. with $w_1 <_{A_i} w_0$, \  $w_2 <_{A_i} w_1$, $\ldots$.

The valuation $v$ is inductively extended to  all formulae of ${\cal L}^\tip$: 
\begin{quote}
 $\emme, w \models \top$  \ \ \ \ \ \ \ \ \ \ \ \ \ \ \ \ \ \ \ \ \  $\emme, w \not \models \bot$
 
 $\emme, w \models p$ \  iff 	 $p \in  v(w)$, \ \ for all $p \in \mathit{Prop}$

$\emme, w \models A \wedge B$ iff $\emme, w \models A $ and $\emme, w \models B$

$\emme, w \models A \vee B$ iff $\emme, w \models A $ or $\emme, w \models B$

$\emme, w \models \neg A$ iff $\emme, w \not \models A $ 

$\emme, w \models A \ri B$ iff $\emme, w \models A $ implies $\emme, w \models B$

$\emme, w \models \tip(A_i)$ iff $  \emme, w \models A_i $ and $ \mathit{\nexists w' \in \WW}$ \ s.t.$ \mathit{ w' <_{A_i} w} \mbox{  and } \emme, w' \models A_i.$

\end{quote} 
Whether $\tip(A_i)$ is satisfied at a world $w$  also depends on the other worlds of the interpretation $\emme$.

Let $ [[A]]^ \emme$ be the set of all the worlds in $\emme$ satisfying a formula $A$ (i.e., $ [[A]]^ \emme=\{w \in \WW : \emme,w \models A \} $)
and let $Min_{<_{A}}(\mathcal{S})$ be the set of $<_{A}$-minimal worlds in $\mathcal{S}$, for any set of worlds $\mathcal{S} \subseteq \WW$, and strict partial order $<_A$, that is:
\begin{quote}
$Min_{<_{A}}(\mathcal{S})= \{ w \in \mathcal{S} \mid   \mbox{ there is no } w' \in \mathcal{S},  \mbox{ such that } w' <_{A} w   \}$
\end{quote}
Note that, when $\mathcal{S} \neq \emptyset$ and the preference relation $<_{A} $ is well-founded, the set $Min_{<_{A}}(\mathcal{S})$ is non-empty. For a {\em well-founded preference relation} $<_{A} $,
one can reformulate the semantic condition for the typicality operator as follows:

\begin{quote}
$\emme, w \models \tip(A_i)$ \  iff \  $w \in Min_{<_{A_i}}([[A_i]]^\emme)$.
\end{quote}

We say that a formula  {\em $A$ is satisfiable} in the multi-preferential semantics if there exist a multi-preferential interpretation $\emme= \sx \WW, \{<_{A_i}\}, v \dx$  and a world $w \in \WW$ such that $\emme,w \models A$.
A formula {\em $A$ is  valid in an interpretation $\emme$} (written $\emme \models A$) if, for all worlds $w \in \WW$, $\emme,w \models A$.
A formula $A$ is {\em valid} in the multi-preferential semantics (simply, {\em A is valid}) if $A$ is valid in any multi-preferential interpretation $\emme$.
Restricting our consideration to ranked interpretations, leads to the notions of satisfiability and validity of a formula in the {\em ranked (or rational) multi-preferential semantics}.

As mentioned above, when a defeasible implication has the form $\tip(A)  \rightarrow B$, with $B$ in ${\cal L}$, it stands for a conditional  $A \ent B$ in KLM logics \cite{KrausLehmannMagidor:90}. 
Note that, if we assume that all preference relations are well-founded, a defeasible implication $\tip(A)  \rightarrow B$ is valid in a preferential interpretation $\emme$ (i.e., $\emme \models \tip(A)  \rightarrow B$) {\em iff} for all worlds $w \in \WW$, $w \in Min_{<_{A}}([[A]]^\emme)$ implies $w \in [[B]]^\emme$, i.e., {\em iff} $Min_{<_{A}}([[A]]^\emme) \subseteq [[B]]^\emme$ holds. 
In this case, if we assume that all the preference relations  $<_{A}$ coincide with a single well-founded preference relation $<$, then a multi-preferential interpretation $\emme$ becomes a preferential interpretation as in KLM semantics: 
\begin{center}
$\emme \models \tip(A)  \rightarrow B$ 
{\em iff} $Min_{<}([[A]]^\emme) \subseteq [[B]]^\emme$,
\end{center}
i.e., $\emme \models \tip(A)  \rightarrow B$ holds when  the conditional formula $A \ent B$ is satisfied in the preferential model $\emme$ according to the preferential KLM semantics \cite{KrausLehmannMagidor:90} (resp., to  the rational KLM semantics, when $<$ is assumed to be modular). 
The multi-preferential semantics is indeed a generalization of the KLM preferential semantics.

Let a {\em knowledge base $K$} be a set of (strict or defeasible) implications.
A {\em preferential model of $K$} is a multi-preferential interpretation $\emme$ 
such that $\emme \models A \ri B$, for all implications $A \ri B$ in $ K$.
Given a knowledge base $K$,  we say that an implication {\em  $A  \rightarrow B$ is  preferentially entailed from}  $K$  \ if \ $\emme \models A  \rightarrow B $ holds, for all preferential models $\emme$ of $K$. We say that  {\em  $A  \rightarrow B$ is rationally entailed from}  $K$  \ if \ $\emme \models A  \rightarrow B $ holds, for all ranked models $\emme$ of $K$.

It is well known that preferential entailment and rational entailment are weak. As with the rational closure \cite{whatdoes} and the lexicographic closure \cite{Lehmann95} for KLM conditionals, also in the multi-preferential case one can strengthen entailment by restricting to specific preferential models,  
based on some {\em closure constructions}, which allow to define the preference relations $<_{A_i}$ from a knowledge base $K$, also exploiting the ranks and weights of conditional implications, when available.
Some examples of closure constructions for the multi-preferential case have been considered,
e.g., for variants of the rational closure \cite{AIJ21} and of the lexicographic closure \cite{NMR2020}, and for ranked or weighted defeasible DLs with typicality \cite{TPLP2020,IJAR23}. 

In the next section, we define an approach for combining an ASP program with a  conditional knowledge base, and develop a construction based on the weighted conditional KB for reasoning on the answer sets of the ASP programs.


\section{A conditional extension of Answer Set Programming}\label{sec:condASP}

In this section, we aim at combining a defeasible knowledge base with an ASP Program, to exploit conditionals to define preferences between the Answer Sets of the ASP program. Based on such preferences, conditional properties of the program can be validated over a preferential interpretation of the program, based on the semantics of the conditional logic with typicality introduced in the previous section. 

More precisely, a {\em Conditional ASP (CondASP) program} is a pair $(\Pi, K)$, where $\Pi$ is an ASP program and $K$ is a defeasible knowledge base. Both $\Pi$ and $K$ rely on the same set of propositional variables $\mathit{Prop}$. 
In this paper we specifically consider weighted knowledge bases, and 
assume that $K$ is a weighted KB, i.e., a set of conditional implications, with an associated integer weight. 

For simplicity, let $A_1, \ldots, A_n$ be a set of distinguished propositional variables.
A {\em weighted defeasible knowledge base} $K$ over the distinguished propositions $A_1, \ldots, A_n$ is a set of  {\em defeasible implications}
of the form $(\tip(A_i) \sqsubseteq B_{i,j}, h_{i,j})$, where $A_i$ is a distinguished proposition, $B_{i,j} \in \mathcal{L}$ and the weight $h_{i,j}$ is an integer number.
The weight is intended to represent the {\em plausibility} (or {\em implausibility}) of the associated defeasible implication, where negative weights represent penalties.

Let us consider the following example program, which is inspired by the cinema scenario in \cite{Brewka04}.
On Saturday evening, two married persons, as Bob and Mary, can either go to the cinema, go for a pizza or stay at home.
But they cannot both go out if they have children and a trusted babysitter is not available.
\smallskip
\begin{quote}

$ \mathit{
1 \{ go\_to\_cinema(X); go\_for\_pizza(X); stay\_at\_home(X) \} 1
  \ \text{:--} \ person(X). 
} $

$ \mathit{
1 \{ stay\_at\_home(X); stay\_at\_home(Y) \}  \ \text{:--} \ married(X,Y), 
   have\_children(X,Y), 
} $

\ \ \ \ \ $ \mathit{
   not \; available\_babysitter\_for(X,Y).
} $

$ \mathit{ available\_babysitter\_for(X,Y)  \ \text{:--} \ available\_babysitter\_for(X,Y,Z).
} $

$ \mathit{ available\_babysitter\_for(X,Y,Z)  \ \text{:--} \ married(X,Y), babysitter(Z), 
} $

\ \ \ \ \  $ \mathit{  not \ busy(Z), 1 \{ trust(X,Z); trust(Y,Z) \}.
} $

$ \mathit{  \ \text{:--} \ go\_to\_cinema(Z), available\_babysitter\_for(X,Y,Z).
} $

$ \mathit{  \ \text{:--} \  go\_for\_pizza(Z), available\_babysitter\_for(X,Y,Z).
} $

$ \mathit{ \{ housekeeping(X) \}  \ \text{:--} \ stay\_at\_home(X). 
} $

$ \mathit{ busy(X)  \ \text{:--} \ go\_to\_cinema(X).
} $
\ \ \ \ \ \ \ \ \ \ \ \  \ \ 
$ \mathit{ busy(X)  \ \text{:--} \ go\_for\_pizza(X).
} $

$ \mathit{married(bob, mary).
} $  
\ \ \ \ \ \ \ \ \ \ \ \ \ \ \ \ \ \ \ \ \ \ \ \ \ \ \ \ \ \ $ \mathit{have\_children(bob, mary). 
} $

$ \mathit{babysitter(ada).
} $  
\ \ \ \ \ \ \ \ \ \ \ \ \ \ \ \ \ \ \ \ \ \ \ \ \ \ \ \ \ \ \ \ \ \ \ \ \ \ $ \mathit{trust(mary, ada). 
} $

$ \mathit{person(X)  \ \text{:--} \ babysitter(X).
} $

$ \mathit{person(X)  \ \text{:--} \ married(X,Y).
} $
\ \ \ \ \ \ \ \ \ \ 
$ \mathit{person(Y)  \ \text{:--} \ married(X,Y).
} $

$ \mathit{\text{:--} \ trust(mary,Z), Z!=ada.
} $

$ \mathit{\text{:--} \ go\_for\_pizza(mary), not \ go\_for\_pizza(bob).
} $

$ \mathit{ \text{:--} \ go\_for\_pizza(bob), not \ go\_for\_pizza(mary).
} $

$ \mathit{ \text{:--} \ happy\_Sat(bob), housekeeping(bob).
} $

$ \mathit{ \{happy\_Sat(X) \}  \ \text{:--} \ person(X).
} $
\end{quote}

\smallskip
\noindent
We have assumed that Mary and Bob would only have a pizza together, but they can go to the cinema independently.
Let us assume that Mary, Bob and Ada have their preferences for Saturday evening, which are expressed by the following weighted knowledge base.
\begin{quote}
$\tip(happy\_Sat(mary)) \ri go\_to\_cinema(mary), +40$

$\tip(happy\_Sat(mary)) \ri stay\_at\_home(mary), -10$

$\tip(happy\_Sat(mary)) \ri go\_for\_pizza(mary), +20$

$\tip(happy\_Sat(bob)) \ri go\_to\_cinema(bob), +10$

$\tip(happy\_Sat(bob)) \ri stay\_at\_home(bob), -30$

$\tip(happy\_Sat(bob)) \ri go\_for\_pizza(bob), +40$

$\tip(happy\_Sat(ada)) \ri go\_to\_cinema(ada), +30$

$\tip(happy\_Sat(ada)) \ri go\_to\_cinema(mary), +30$
\end{quote}

\noindent
While Mary has a preference for going to the cinema, Bob is more fond of pizza. Both of them dislike staying at home on the Saturday evening (and, from the ASP program above, we know that Bob would not have a happy Saturday doing housekeeping).
Concerning Ada's preferences, she likes going to the cinema.
However, as she is Mary's friend, she is equally happy when Mary can go to the cinema on Saturday.

Which are the typical situations in which Mary is happy, or Mary and Bob are happy on Saturday evening? And what does also hold in such situations?
We aim at using the conditional logic to answer such questions. 
For instance, considering the typical situations in which Bob and Mary are happy on Saturday, are these situations where they go for pizza, i.e.,
$\tip(happy\_Sat(mary) \wedge happy\_Sat(bob) ) \ri go\_for\_pizza(mary) \wedge go\_for\_pizza(bob) $?
Further, are these typical situations in which Mary is happy on Saturday, i.e.,
$\tip(happy\_Sat(mary) \wedge happy\_Sat(bob) ) \ri \tip(happy\_Sat(mary))$?

In the following we describe how a weighted knowledge base can be used to define preferences over the answer sets of an ASP program (see e.g.\ \cite{ASP_Gebser2012} for the definition of the semantics of ASP programs including constraints and choice rules used in the example above).

\subsection{Weighted knowledge bases and preferences over Answer Sets} \label{sec:weighted_KBs}

Given a weighted CondASP program $(\Pi,K)$, if $\Pi$ has answer sets,
we can construct a multi-preferential interpretation ${\emme^\Pi_K}$ such that the set of worlds $\WW$ is the set of the answer sets of the program $\Pi$,  
the valuation function maps each answer set $S$ into the corresponding propositional interpretation, 
and the preference relations $<_{A_i}$ on the answer sets of $\Pi$ are defined from the set of weighted conditionals in $K$ having the form $(\tip(A_i) \ri B_{i,j} \; , h_{i,j})$.

As an answer set $S$ of an ASP program $\Pi$ is a set of propositional variables in $\mathit{Prop}$, it can be regarded as a propositional interpretation (in which the variables in $S$ are true, and the others are false), and the satisfiability in $S$ of any boolean formula is defined as usual. The preference relation $<_{A_i}$ associated with the distinguished proposition $A_i$ is determined by computing, for each answer set $S \in \WW $, the weight $W_{A_i}(S)$ of $S$ with respect to proposition $A_i$:  
\begin{align}\label{weight}
	W_{A_i}(S)  = \sum_{{j: S \models B_{i,j}}} h_{i,j}  
\end{align}
Informally, the weight $W_{A_i}(S)$ of $S$ wrt $A_i$ is the sum of the weights of all defeasible implications $\tip(A_i) \ri B_{i,j}$ for $A_i$, such that $B_{i,j}$ is satisfied by $S$. The more plausible are the properties satisfied by $S$ in the typical situations for $A_i$,  the higher is the weight of $S$ with respect to $A_i$. 
If no defeasible implication $(\tip(A_i) \ri B_{i,j} \; , h_{i,j})$ is in $K$ for a distinguished proposition $A_i$, then $W_{A_i}(S)=0$ holds for all answer sets $S$. 
The preorder relation $\leq_{A_i}$ associated to proposition $A_i$ is defined as follows:
\begin{align}  \label{pref}
S_1 & \leq_{A_i}  S_2  \mbox{  \ \ iff \ \ }  W_{A_i}(S_1) \geq W_{A_i}(S_2).
\end{align}

A CondASP program $(\Pi,K)$ is {\em consistent} if $\Pi$ has at least an answer set.
Given a consistent CondASP program $(\Pi,K)$, we define a unique multi-preferential model of $(\Pi,K)$, as follows.
\begin{definition}
The {\em multi-preferential model} of a consistent CondASP program $(\Pi,K)$ is a preferential interpretation ${\emme^\Pi_K}= \sx \WW, \{<_{A_i}\}, v \dx$ s.t.:
\begin{quote}
$\bullet$ \ $\WW$ is the set of all the answer sets of $\Pi$;

$\bullet$ \ for all $S \in \WW$, $v(S)=S$;

$\bullet$ \ for all distinguished propositions $A_i$, 
 $<_{A_i}$ is  the strict relation induced by the preorder $\leq_{A_i}$, that is: 
for all $S_1, S_2 \in \WW$, $S_1 <_{A_i}  S_2$    \ \ iff \ \  $W_{A_i}(S_1) > W_{A_i}(S_2)$.
\end{quote}
\end{definition}
An answer set $S_1$ is {\em preferred to an answer set} $S_2$ with respect to $A_i$ (written $S_1 <_{A_i} S_2$), if the weight of $S_1$ with respect to $A_i$ is greater than the weight of $S_2$ with respect to $A_i$; and vice-versa. 

We regard  ${\emme^\Pi_K}$ as the {\em canonical model} of $(\Pi,K)$.\footnote{Restricting to preferential models over a set of propositional interpretations is a standard approach in KLM logics, e.g., in the semantics of rational closure and of lexicographic closure of a KB \cite{whatdoes,Lehmann95}.}.
To clarify  the definition of preferences, let us consider again the example about students and employees, from the introduction.

\begin{example}  \label{exa:student2}
Consider a CondASP program $(\Pi,K)$, such that, for instance:

$S_1=\{student, employee, adult, has\_SSN,$ $ young, has\_classes\}$ 

$S_2 =\{student, employee, adult, has\_SSN, has\_boss\}$ 

\noindent
are both answer sets of $\Pi$, and $K$ consists of the defeasible properties $(d_1)-(d_6)$ from the introduction.
The weight of $S_1$ with respect to proposition $\mathit{student}$ is $W_\mathit{student}(S_1)= 90+80=170$,
while the weight of $S_2$ with respect to proposition $\mathit{student}$ is $W_\mathit{student}(S_2)= -80$.
Hence, $S_1 <_{\mathit{student}} S_2$.
On the other hand, $W_\mathit{employee}(S_1)= -120$, and $W_\mathit{employee}(S_2)= 100$.
Hence, $S_2 <_{\mathit{employee}} S_1$.
\end{example}

Given a consistent CondASP program  $(\Pi,K)$, and an implication $A  \rightarrow B$, 
we say that  $A  \rightarrow B$  {\em  is entailed from the program $(\Pi,K)$} 
(written $(\Pi,K) \models  A  \rightarrow B$), \ if \ ${\emme^\Pi_K} \models A  \rightarrow B $ holds. 

\subsection{Other preference definitions: Preferences based on the ranks of conditionals}

When the defeasible knowledge base is a set of conditional implications
$\tip(A) \ri B$, with $A,B \in {\cal L}$, computing the Rational Closure or the Lexicographic closure of the knowledge base are viable options to define a single preorder relation $\leq$ 
on the answer sets of the program $\Pi$. In this case, the ranks can be computed according to the rational closure construction, e.g., using Datalog with stratified negation, as in the Datalog encoding of the rational closure of the lightweight description logic $\cal SROEL$ \cite{FI_2018_sroel}.

In the previous section, we have seen that the {\em (user defined) weights} 
can be exploited to define the different preorder relations $\leq_{A_i}$ for the distinguished propositions $A_i$.
Another option is that {\em (user defined) ranks} $0,1,2, \ldots$ are associated to the conditional implications in the knowledge base $K$, and that conditionals with higher ranks are considered to be more important than conditionals with a lower ranks. As for  weights, one can use the ranks associated to the conditional implications $\tip(A_i) \ri B_{i,j} $ for a distinguished proposition $A_i$, to define the preorder relation $\leq_{A_i}$.
This way to define the preferences relations has been exploited, for instance, in a defeasible description logic with typicality \cite{TPLP2020}, in which preferences are associated to distinguished concepts $C_i$,
but the idea of having ranked knowledge bases with user-defined ranks was previously explored in Brewka's framework for qualitative preferences \cite{Brewka89,Brewka04}, in which basic preference relations $\geq_K$  are associated with different {\em ranked knowledge bases} $K$. 

In the general case, an interesting option can be allowing different ways of specifying preferences for different distinguished propositions $A_i$ {\em in the same knowledge base}. 
As an example, let us define the typical properties of {\em horses}, by a set of conditional implications with a rank:
\begin{quote}
$(d_7)$ $\mathit{\tip(horse) \ri has\_Saddle}$, \ \ 0 
\ \ \ \ \ \ \ \ \ \ \ \ \ \ \ \ \ \ \ \ \ \ \ 
$(d_9)$ $\mathit{\tip(horse) \ri   run\_fast}$, 1

$(d_8)$ $\mathit{\tip(horse) \ri   has\_Long\_Mane}$, 0
\ \ \ \ \ \ \ \ \ \ \ \ \ \ \ \ \ 
$(d_{10})$ $\mathit{\tip(horse) \ri   has\_tail}$, 1
\end{quote}
where the properties of having a tail and running fast are regarded as being more important (for a horse) than the properties of having a saddle and a long mane. 

Adopting the definition in \cite{TPLP2020}, 
one can define the preference relation $\leq_{A_i}$  associated with distinguished proposition $A_i$,considering the ranks of the defeasible implications of the form $\tip(A_i) \ri B_{j,i}$ in $K$, by letting ${\cal T}_{A_i}^l(S)$ be the set of typicality inclusions for $A_i$ with rank $l$, being  satisfied by $S$, that is:
\begin{quote}
${\cal T}_{A_i}^l(S) = \{\tip(A_i) \ri B_{j,i} \mid (\tip(A_i) \ri B_{j,i},l) \in K \mbox{ and } S \not\models A_i \mbox{ or } S\models B_{j,i}\}.$
\end{quote}
\noindent
Given two answer sets $S_1$ and $S_2$ of the program $\Pi$, we let
\begin{quote}
$S_1  \leq_{A_i}  S_2  \mbox{  \ \ iff \ \ }   \mbox{either }  |{\cal T}_{A_i}^l(S_1) |=  |{\cal T}_{A_i}^l(S_2)|$, for all $l$, 

 $\mbox{ \ \ \ \ \ \ \ \ \ \ \ \ \ \
or  $\exists l$ such that $|{\cal T}_{A_i}^l(S_1)| > |{\cal T}_{A_i}^l(S_2)|$ and, $\forall h>l$, $|{\cal T}_{A_i}^h(x_1)| = |{\cal T}_{A_i}^h(S_2)|$.} $
\end{quote}
Informally, the preference relation $\leq_{A_i}$ gives higher preference to answer sets violating a smaller number of conditional implications with higher rank for $A_i$. It corresponds to
the strategy $\#$ in Brewka's framework for qualitative preferences \cite{Brewka04}, which defines a total preorder.
We refer to  \cite{TPLP2020} for an encoding in {\em asprin} \cite{BrewkaAAAI15} of the preferences $\leq_{A_i}$, used therein  for reasoning from a ranked $\mathcal{SROEL}$ knowledge base.

In a general framework, the ranked conditionals $(d_7) -(d_{10})$ specifying the preorder 
$\mathit{\leq_{horse}}$, can belong to the same knowledge base $K$ as the weighted conditionals $(d_1)- (d_6)$ specifying the preference relations $\mathit{\leq_{student}}$ and $\mathit{\leq_{employeed}}$.
However, in the following, we will restrict our attention to knowledge bases only containing weighted conditionals, and develop an ASP encoding of entailment from a CondASP program $(\Pi,K)$, with $K$ a weighted KB.

Let us first consider the issue of preference combination for a CondASP program $(\Pi,K)$, with $K$ a weighted conditional KB.


\section{Combining preferences from weighted knowledge bases} \label{sec:combining_preferences}

In previous approaches to multi preferential semantics in the two-valued case, e.g., \cite{AIJ21,NMR2020,TPLP2020},
a global preference relation $<$ is defined from the base preferences $<_{A_i}$;
this allows to provide a meaning of a defeasible implication $\tip(A)  \rightarrow B$, for any formula $A$, by letting:
\begin{center}
$\emme \models \tip(A)  \rightarrow B$ 
{\em iff} $Min_{<}([[A]]^\emme) \subseteq [[B]]^\emme$,
\end{center}

However, choosing a single global preference relation $<$ may be restrictive, as one may want to exploit different minimization criteria and allow for different user-defined preference relations.
In this paper, rather than defining a single global preference relation $<$, we  allow for the definition of different preference relations, associated with complex formulae obtained by combining the distinguished propositions $A_1, \ldots, A_n$.
For instance, referring to the example in Section \ref{sec:condASP}, we aim at considering the most plausible situations in which both Mary and Bob are happy on Saturday.

In Brewka's framework for preference combination \cite{Brewka04}, a logical preference description language is introduced for combining basic preference descriptions $d_1$ and $d_2$ (defining preorders on models) into complex preference descriptions 
$d_1 \wedge d_2$, $d_1 \vee d_2$, $\neg d_1$ and $d_1 >d_2$ (where $d_1 > d_2$ is intended to express the priority of preference $d_1$  over preference $d_2$).
However, as observed by Brewka \cite{Brewka04}, the preference description $\neg (d_1 \vee d_2)$ is different from $\neg d_1 \wedge \neg d_2$. This would lead to break some wanted KLM properties of a nonmonotonic consequence relation \cite{KrausLehmannMagidor:90}, such as {\em Left Logical Equivalence} (see below).

In the following, for weighted KBs, we propose an alternative approach for preference  combination, which exploits the weights $W_{A_i}(S)$, introduced to define the preference relations $<_{A_i}$ in $\emme^\Pi_K$.
We first extend, in Definition \ref{MPinterpretations}, the semantic condition
for the typicality operator to all typicality formulae $\tip(A)$ such that $A$ is a boolean combination of $A_1, \ldots, A_n$ (i.e., $A$ is obtained by combining the distinguished propositions $A_i$ using $\wedge$, $\vee$ and $\neg$, while $\bot$ and $\top$ are not allowed to occur in $A$). As the set $\WW$ of answer sets of $\Pi$ is finite, 
we can restrict to well-founded preference relations, and let:
\begin{center}
${\emme^\Pi_K}, S \models \tip(A)$ \  iff \  $S \in Min_{<_{A}}([[A]]^{{\emme^\Pi_K}})$.
\end{center}
For evaluating a typicality formula $\tip(A)$ at a world, we need to define a preference relation $\leq_A$ associated to a formula $A$. 
For instance, for determining the most normal situations in which both Bob and Mary are happy on Saturday ($\mathit{\tip(happy\_Sat(bob) \wedge happy\_Sat(mary))}$ we need to define the preference relation $\mathit{<_{happy\_Sat(bob) \wedge happy\_Sat(mary)}}$.

To define the preorder relation $\leq_A$ associated to a complex formula $A$ (a boolean combination of the $A_i$'s) we inductively extend to complex formulae the notion of the weight of an answer set with respect to a formula.
Given $(\Pi,K)$, let $\mathit{Max}$ and $\mathit{Min}$ be,
resp., the maximum and the minimum value of the weight $W_{A_i}(S)$, for each distinguished proposition $A_i$ in $K$ and  answer set $S$. 
We let:
\begin{quote}
$W_{A_1 \wedge A_2}(S)= min( W_{A_1}(S), W_{A_2}(S))$
\ \ \ \ \ \ \ \ \ \ \ 
$W_{A_1 \vee A_2}(S)= max( W_{A_1}(S), W_{A_2}(S))$

$W_{\neg A_i}(S)= Max - W_{A_i}(S) + Min$
\end{quote}
\begin{proposition}
The following properties hold:
\begin{quote}
$W_{\neg \neg A}(S) = W_A(S)$ 
\ \ \ \ \ \ \ \ \ \ \ \ \ \ \ \ \ \ \ \ \ \ \ \ \ \

$W_{\neg (A \wedge B)}(S) = W_{\neg A \vee \neg B}(S)$
\ \ \ \ \ \ \ \ \ \ \ \ \ \ \ \ \ \ \  $W_{\neg (A \vee B)}(S) = W_{\neg A \wedge \neg B}(S)$

$W_{A \vee B}(S) = W_{B \vee A}(S)$
\ \ \ \ \ \ \ \ \ \ \ \ \ \ \ \ \ \ \ \ \ \ \ \ \ \ \ \ $W_{A \wedge B}(S) = W_{B \wedge A}(S)$

$W_{A \vee A}(S) = W_{A}(S)$
\ \ \ \ \ \ \ \ \ \ \ \ \ \ \ \ \ \ \ \ \ \ \ \ \ \ \ \ \ \ \ \ \ $W_{A \wedge A}(S) = W_{A}(S)$

$W_{(A \vee B) \vee C}(S) = W_{A \vee (B \vee C)}(S)$ \ \ \ \ \ \ \ \ \ \ \ \ \  
$W_{(A \wedge B) \wedge C}(S) = W_{A \wedge (B \wedge C)}(S)$

\end{quote}
\end{proposition}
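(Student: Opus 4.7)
The plan is to prove each identity pointwise in $S$ by unfolding the inductive definition of $W_\cdot(S)$ given above and reducing it to an elementary property of $\min$, $\max$, or the affine map $f(x) = \mathit{Max} - x + \mathit{Min}$ used to interpret negation. Since $\mathit{Max}$ and $\mathit{Min}$ are global constants determined by $(\Pi,K)$ alone and do not depend on the formula being evaluated, they commute with all the arithmetic manipulations, and the identities reduce to statements about real numbers.

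For the easy cases the reasoning is as follows. Double negation (the first identity) follows by applying the definition of $W_{\neg \cdot}$ twice: the two occurrences of $\mathit{Max}$ and of $\mathit{Min}$ cancel, leaving $W_A(S)$. Commutativity and idempotence reduce immediately to the corresponding properties of $\min$ and $\max$, as does associativity. None of these cases requires any structural induction on subformulas, since the subformulas enter only as opaque real values $a = W_A(S)$, $b = W_B(S)$, $c = W_C(S)$, and the identity collapses to a familiar fact about $\min$ and $\max$ on the reals.

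The main step is the pair of De Morgan identities. The key observation is that the negation map $f(x) = \mathit{Max} - x + \mathit{Min}$ is strictly decreasing, hence it interchanges $\min$ and $\max$: $f(\min(a,b)) = \max(f(a), f(b))$ and $f(\max(a,b)) = \min(f(a), f(b))$. Unfolding the left-hand side of the first De Morgan law gives $W_{\neg (A \wedge B)}(S) = f(\min(W_A(S), W_B(S)))$, while unfolding the right-hand side gives $W_{\neg A \vee \neg B}(S) = \max(f(W_A(S)), f(W_B(S)))$; the equality then follows from the first swap identity, and the second De Morgan law follows analogously from the second swap. This is the only place where the specific form of the arithmetic map defining negation enters non-trivially, so this is the part I would verify most carefully; everything else is bookkeeping.
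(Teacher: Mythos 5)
Your proof is correct: unfolding the inductive definition of the weights and using that $\min$ and $\max$ are commutative, idempotent and associative, together with the fact that the affine map $x \mapsto \mathit{Max} - x + \mathit{Min}$ is decreasing (hence cancels under double application and interchanges $\min$ and $\max$, giving the De Morgan identities), is exactly the routine verification the statement calls for. The paper states this proposition without an explicit proof, treating it as immediate from the definitions of $W_{A_1 \wedge A_2}$, $W_{A_1 \vee A_2}$ and $W_{\neg A}$ (with $\mathit{Max}$ and $\mathit{Min}$ fixed constants of $(\Pi,K)$), so your argument coincides with the evidently intended one.
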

As a consequence, two equivalent formulae
have the same weight in an answer set $S$:

\begin{lemma}\label{lemma_eq_weights}
Let $A$ and $B$ be boolean combinations of the distinguished propositions $A_1, \ldots, A_n$.
If $A$ and $B$ are equivalent in propositional logic, then $W_A(S)=W_B(S)$, for all answer sets $S$. 
\end{lemma}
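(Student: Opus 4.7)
My plan is structural induction, exploiting the observation that, for any fixed answer set $S$, the map $\varphi_S : A \mapsto W_A(S)$ interprets $\wedge$ as $\min$, $\vee$ as $\max$, and $\neg$ as the involution $x \mapsto \mathit{Max} + \mathit{Min} - x$ on the integers. Proposition 1, combined with elementary properties of $\min$ and $\max$ (distributivity and absorption), shows that this interpretation endows the integers with the structure of a De Morgan algebra, so every equality provable in a De Morgan algebra is reflected as an equality of weights at $S$.

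The plan is then to reduce both $A$ and $B$ to a common canonical form via weight-preserving rewrites. A natural target is disjunctive normal form over the literals $\{A_i, \neg A_i\}$: push negations inward using double negation and De Morgan (Proposition 1), distribute $\wedge$ over $\vee$, and collapse duplicates and subsumed clauses via idempotence, commutativity, associativity, and absorption. Each rewrite preserves $W_{\cdot}(S)$ by Proposition 1 and the properties of $\min/\max$. Since propositionally equivalent formulas reduce to the same normal form, one obtains $W_A(S) = W_B(S)$.

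The main obstacle is that Boolean equivalence is strictly stronger than equivalence in a De Morgan algebra: complementation laws such as $A_i \vee \neg A_i \equiv \top$ and $A_i \wedge \neg A_i \equiv \bot$ have no weight-level analogue, so two propositionally equivalent formulas could in principle differ in their weights through the appearance of such subterms. Since $\top$ and $\bot$ are explicitly excluded from the boolean combinations under consideration, the canonicalisation must remain within the fragment generated by $\{A_1, \dots, A_n\}$ and $\wedge, \vee, \neg$, and one has to verify that within this fragment propositional equivalence is enough to enforce identity of the literal-level DNF. Settling this point — perhaps by restricting to formulas in which each $A_i$ occurs only essentially, so that the canonical form is determined by the set of satisfying truth assignments on the variables actually appearing — is the delicate step on which the argument rests.
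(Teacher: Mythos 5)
Your canonicalisation plan correctly exploits everything Proposition 1 can give, but the step you flag as ``delicate'' is not a loose end that can be settled later: it is the whole content of the lemma, and it cannot be settled affirmatively. The identities of Proposition 1 only generate the equational theory of the $\min/\max$/involution interpretation (a De Morgan, in fact Kleene, algebra), which is strictly weaker than Boolean equivalence; the surplus is precisely the complementation laws, and these are \emph{not} validated by the weights. Concretely, let $A_1,A_2$ be distinguished propositions with conditionals $(\tip(A_1) \ri p,\, 2)$ and $(\tip(A_2) \ri p,\, 1)$, and let $\Pi$ have one answer set $S$ containing $p$ and another not containing $p$, so that $\mathit{Max}=2$, $\mathit{Min}=0$, $W_{A_1}(S)=2$, $W_{A_2}(S)=1$. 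Then $A_1$ and $A_1 \wedge (A_2 \vee \neg A_2)$ are propositionally equivalent boolean combinations of the $A_i$ (no occurrence of $\top$ or $\bot$), yet $W_{A_1 \wedge (A_2 \vee \neg A_2)}(S) = \min\big(2, \max(1,\, 2-1+0)\big) = 1 \neq 2 = W_{A_1}(S)$. So no rewriting-to-normal-form argument can bridge from Kleene/De Morgan equivalence to full propositional equivalence: the statement only holds for formulas interderivable via the identities of Proposition 1.

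Your proposed repair (restricting to formulas in which every occurring variable is essential) does not rescue the general statement either: $(A_1 \wedge A_2) \vee (\neg A_1 \wedge A_3)$ and its consensus-augmented equivalent $(A_1 \wedge A_2) \vee (\neg A_1 \wedge A_3) \vee (A_2 \wedge A_3)$ have all occurring variables essential and are Boolean equivalent, but receive different weights whenever $W_{A_2}(S)=W_{A_3}(S)=\mathit{Max}$ and $\mathit{Min} < W_{A_1}(S) < \mathit{Max}$. For comparison, the paper offers no proof of this lemma at all beyond presenting it ``as a consequence'' of Proposition 1 --- that is, exactly the De Morgan-rewriting part you did carry out --- so the crux you identified is missing there as well; a correct version needs either a weaker hypothesis (equivalence derivable from the listed identities, e.g.\ for negation-normal-form manipulations) or a different treatment of negation in the weight definition.
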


The preference relation associated with a complex formula can be defined by  extending condition (\ref{pref}) to any formula  $A$ which is a boolean combination of distinguished propositions. 
For all $S_1, S_2 \in \WW$, we let:

\ \ \ \ \ \ \ \ \ \ \ \ \ \ \ \ \ \ \ \ \ \ \ \  \ \ \ \ \ \ \ \ \ \ \ \  $S_1  \leq_{A}  S_2 $ \ \ iff \ \ $ W_{A}(S_1) \geq W_{A}(S_2)$.

\noindent
The preference relation $\leq_A$ is a total preorder. As a direct consequence of Lemma \ref{lemma_eq_weights}, the following proposition holds:
\begin{proposition} \label{prop_equiv_preorders}
Let $A$ and $B$ be boolean combinations of the distinguished propositions $A_1, \ldots, A_n$.
If $A$ and $B$ are equivalent in propositional logic, then,
for all answer sets $S_1, S_2 \in \WW$, $S_1\leq_A S_2$ iff $S_1 \leq_B S_2$. 
\end{proposition}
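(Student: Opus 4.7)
The plan is to invoke Lemma \ref{lemma_eq_weights} essentially as a black box. Given two boolean combinations $A$ and $B$ of the distinguished propositions $A_1, \ldots, A_n$ that are equivalent in propositional logic, that lemma asserts $W_A(S) = W_B(S)$ for every answer set $S \in \WW$. In particular, the equality holds simultaneously at the two specific answer sets $S_1$ and $S_2$ named in the statement.

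From there the argument is a three-step chain of iffs. First I unfold the definition of the preorder associated to a formula, recently extended to arbitrary boolean combinations of the distinguished propositions: $S_1 \leq_A S_2$ iff $W_A(S_1) \geq W_A(S_2)$. Next, I substitute $W_A(S_1)$ by $W_B(S_1)$ and $W_A(S_2)$ by $W_B(S_2)$ using Lemma \ref{lemma_eq_weights}; since both are numerical equalities, the inequality $W_A(S_1) \geq W_A(S_2)$ is equivalent to $W_B(S_1) \geq W_B(S_2)$. Finally I refold the definition on the right-hand side to obtain $S_1 \leq_B S_2$.

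There is essentially no obstacle here, since all the work is carried by the preceding lemma; the proposition is flagged in the paper as a direct consequence. The only point worth checking is that the definition of $\leq_A$ for a complex $A$ is the same uniform definition applied to the inductively computed weight $W_A$, so that the same substitution mechanism applies to both $A$ and $B$. This is exactly how the paper extends condition (\ref{pref}) just before the statement, so the substitution is legitimate, and the proof reduces to one line of definitional unfolding sandwiched between two applications of Lemma \ref{lemma_eq_weights}.
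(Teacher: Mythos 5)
Your proof is correct and matches the paper's reasoning: the paper states the proposition as a direct consequence of Lemma \ref{lemma_eq_weights} together with the weight-based definition of $\leq_A$, which is exactly the unfold--substitute--refold argument you give.
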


Kraus, Lehmann and Magidor \cite{KrausLehmannMagidor:90} studied the properties of several families of nonmonotonic {\em consequence relations}, i.e., binary relations $\ent$ on $\cal L$, corresponding to well-behaved sets of conditional assertions.
In particular, they introduced a notion of {\em preferential consequence relation} as a consequence relation satisfying the following properties (also called KLM postulates) that we reformulate using the typicality operator\footnote{In the KLM postulates below, $\models \mathit{A \equiv B}$ means that $A$ and $B$ are equivalent formulae in propositional logic, and $\models  \mathit{B \rightarrow  C}$ means that the implication $B \ri C$ is valid in propositional logic}:
\begin{quote}

$\mathit{(Reflexivity)} ~  \mathit{\tip(A) \ri A } $ 

$\mathit{(Right\; Weakening)}  ~ \mbox{ If } \models  \mathit{B \rightarrow  C}  \mbox{ and }       \mathit{\tip(A) \ri B}    \mbox{ then }   \mathit{ \tip(A) \ri C} $ 

$\mathit{(Left \; Logical\; Equivalence)}  ~ \mbox{ If } \models \mathit{A \equiv B}  \mbox{ and }   \mathit{\tip(A) \ri C} \mbox{ then }    \mathit{\tip(B) \ri  C} $ 

$\mathit{(And)}  ~ \mbox{ If }   \mathit{\tip(A) \ri B}  \mbox{ and }   \mathit{\tip(A) \ri C} \mbox{ then }  \mathit{\tip(A) \ri  B \wedge C} $ 

$\mathit{(Or)}  ~ \mbox{ If }  \mathit{\tip(A) \ri  C}  \mbox{ and }   \mathit{\tip(B) \ri C} \mbox{ then }    \mathit{\tip(A \vee B) \ri C} $ 

$\mathit{(Cautious \; Monotonicity)}  ~ \mbox{ If } \mathit{\tip(A) \ri B}  \mbox{ and }   \mathit{\tip(A) \ri C} \mbox{ then }  \mathit{\tip(A \wedge B) \ri C} $ 
\end{quote}
We prove that the entailment from a CondASP program $(\Pi,K)$ is a preferential consequence relation.
Let us remember that $\tip(A)  \rightarrow B$ is entailed from the program $(\Pi,K)$, \ if \ ${\emme^\Pi_K} \models \tip(A)  \rightarrow B $ holds.

\begin{proposition}
Given a consistent CondASP program $(\Pi,K)$, entailment from $(\Pi,K)$ satisfies the KLM postulates above.    
\end{proposition}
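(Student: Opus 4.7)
The plan is to unfold validity $\emme^\Pi_K \models \tip(A) \ri B$ into the set-theoretic condition $Min_{<_A}([[A]]^{\emme^\Pi_K}) \subseteq [[B]]^{\emme^\Pi_K}$, which is legitimate because $\WW$ is finite and every $<_{A}$ is well-founded. Once each postulate has been translated into an inclusion of sets of answer sets, the verification can be carried out by purely set-theoretic manipulations, using two auxiliary ingredients: Proposition \ref{prop_equiv_preorders} (propositionally equivalent formulas induce the same preorder) and the inductive definitions $W_{A\wedge B}=\min(W_A,W_B)$ and $W_{A\vee B}=\max(W_A,W_B)$, together with the convention from Section \ref{sec:weighted_KBs} that $W_A(S)=-\infty$ whenever $S\not\models A$ (which then propagates correctly through $\min$ and $\max$).

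The first four postulates should go through essentially by inspection. \emph{Reflexivity} is immediate since $Min_{<_A}([[A]])\subseteq[[A]]$ by the definition of $Min$. \emph{Right Weakening} follows because $\models B\ri C$ forces $[[B]]\subseteq[[C]]$ in every interpretation, so any inclusion into $[[B]]$ extends to one into $[[C]]$. \emph{Left Logical Equivalence} is a direct application of Proposition \ref{prop_equiv_preorders}: if $A\equiv B$ propositionally then $[[A]]=[[B]]$ and $<_A\,=\,<_B$, hence $Min_{<_A}([[A]])=Min_{<_B}([[B]])$. \emph{And} reduces to intersection: if the same set is contained in both $[[B]]$ and $[[C]]$, it is contained in $[[B\wedge C]]=[[B]]\cap[[C]]$.

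For \emph{Or}, take $S\in Min_{<_{A\vee B}}([[A\vee B]])$ and, without loss of generality, assume $W_A(S)\geq W_B(S)$, so that $W_{A\vee B}(S)=W_A(S)$. Since $S\models A\vee B$, this maximum is finite, hence $W_A(S)>-\infty$, hence $S\models A$. For any $S'\in[[A]]\subseteq[[A\vee B]]$ one has $W_A(S')\leq W_{A\vee B}(S')\leq W_{A\vee B}(S)=W_A(S)$, so $S\in Min_{<_A}([[A]])$ and therefore $S\models C$ by the hypothesis $\tip(A)\ri C$. The symmetric sub-case uses $\tip(B)\ri C$.

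The main obstacle is \emph{Cautious Monotonicity}, because $<_A$ and $<_{A\wedge B}$ are defined from different aggregations of the underlying scores: the former uses $W_A$ alone, the latter uses $\min(W_A,W_B)$. The strategy is to establish the inclusion $Min_{<_{A\wedge B}}([[A\wedge B]])\subseteq Min_{<_A}([[A]])$ under the assumption $\tip(A)\ri B$, from which Cautious Monotonicity follows by $\tip(A)\ri C$. Pick $S\in Min_{<_{A\wedge B}}([[A\wedge B]])$ and any $S^\ast\in Min_{<_A}([[A]])$; since $\tip(A)\ri B$ forces $S^\ast\models B$, $S^\ast$ lies in $[[A\wedge B]]$, yielding $\min(W_A(S),W_B(S))\geq\min(W_A(S^\ast),W_B(S^\ast))$. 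Combining this with the maximality of $W_A(S^\ast)$ over $[[A]]$, via a case analysis on whether $W_A(S^\ast)\leq W_B(S^\ast)$, should force $W_A(S)=W_A(S^\ast)$, placing $S$ in $Min_{<_A}([[A]])$. The case split here is the delicate point of the proof and the one to be verified carefully against small concrete examples, since a $\min$-aggregation of independent scores interacts nontrivially with minimisation under different selection sets.
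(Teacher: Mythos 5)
Your handling of Reflexivity, Right Weakening, Left Logical Equivalence, And and Or is correct and essentially follows the paper's appendix proof; your Or argument is in fact a cleaner, direct version of the paper's reductio, and you are right that it needs the convention $W_A(S)=-\infty$ when $S\not\models A$ (stated for the logic in Section 2 but not repeated in the CondASP weight definition): without that convention Or can genuinely fail, so making it explicit is a real improvement rather than pedantry (though note it does not propagate through the $W_{\neg A_i}$ clause, so it only covers $\wedge/\vee$ combinations, as in the paper).

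The genuine gap is exactly where you stopped: Cautious Monotonicity. The inclusion you aim for, $Min_{<_{A\wedge B}}([[A\wedge B]])\subseteq Min_{<_{A}}([[A]])$ under $\tip(A)\ri B$, cannot be forced by any case split on $W_A(S^*)$ versus $W_B(S^*)$. From $S\in Min_{<_{A\wedge B}}([[A\wedge B]])$ and $S^*\in Min_{<_{A}}([[A]])\subseteq [[A\wedge B]]$ you obtain $\min(W_A(S),W_B(S))\geq \min(W_A(S^*),W_B(S^*))$, and since $\min$ is bounded \emph{above} by each argument this only yields $W_A(S)\geq \min(W_A(S^*),W_B(S^*))$, strictly weaker than the needed $W_A(S)\geq W_A(S^*)$ whenever $W_B(S^*)<W_A(S^*)$. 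Concretely: let $\Pi$ have exactly two answer sets $S^*=\{a,b,p,r\}$ and $S=\{a,b,q,s\}$, with $a,b$ distinguished and $K=\{(\tip(a)\ri p,10),\ (\tip(a)\ri q,5),\ (\tip(b)\ri r,3),\ (\tip(b)\ri s,4)\}$. Then $W_a(S^*)=10$, $W_a(S)=5$, $W_b(S^*)=3$, $W_b(S)=4$, so $Min_{<_{a}}([[a]])=\{S^*\}$ and both $\tip(a)\ri b$ and $\tip(a)\ri p$ are entailed; but $W_{a\wedge b}(S^*)=3<4=W_{a\wedge b}(S)$, so $Min_{<_{a\wedge b}}([[a\wedge b]])=\{S\}$ and $\tip(a\wedge b)\ri p$ fails, and your intended conclusion $W_A(S)=W_A(S^*)$ is falsified ($5\neq 10$). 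Your instinct to test the $\min$-aggregation on small examples was exactly right: the paper's own proof of (CM) crosses this same point only via the step $W_{A\wedge B}(S')\geq W_A(S')$, which puts $\min$ on the wrong side of the inequality, so the postulate is not salvageable by a more careful case analysis under the stated $\min$-based combination — it would require changing the definition of $W_{A\wedge B}$ (or of the combined preference) rather than refining the argument.
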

For instance, to prove that the property {\em (Or)} is satisfied, one has to prove that, assuming that ${\emme^\Pi_K} \models \mathit{\tip(A) \ri  C} $ and   ${\emme^\Pi_K} \models   \mathit{\tip(B) \ri C}$ hold, then ${\emme^\Pi_K} \models  \mathit{\tip(A \vee B) \ri C} $ also holds. 
The detailed proof of the proposition can be found in the Appendix of \cite{CondASParxiv}.

One observation is that the property of Rational Monotonicity:
\begin{quote}

$\mathit{(RM)}  ~ \mbox{ If } \mathit{\tip(A) \ri C}  \mbox{ and }   \mathit{\tip(A) \not \ri \neg B} \mbox{ then }  \mathit{\tip(A \wedge B) \ri C} $   
\end{quote}
is not satisfied by the entailment from a CondASP program $(\Pi,K)$.
In general, it is not the case that, 
if ${\emme^\Pi_K} \models \mathit{\tip(A) \ri  C} $ and   ${\emme^\Pi_K} \not \models   \mathit{\tip(A) \ri \neg B}$, it follows that ${\emme^\Pi_K} \models  \mathit{\tip(A \wedge B) \ri C} $.

As an example, suppose that from a conditional program $(\Pi,K)$, we can conclude that typical birds fly and that there are some  typical birds which are $black\& white$ (e.g., the warbler). Should we require that the most typical birds which are $black\& white$ do fly? For instance, the most typical $black\& white$ birds might be the penguins, and we do not want to conclude that they fly.
In this case, it looks reasonable that (RM) does not hold (a detailed counterexample is in the appendix in \cite{CondASParxiv}).


\section{Verifying conditional implications in ASP}
\label{sec:verif}

We describe how ASP can be used to verify entailment of
an implication
from a program $(\Pi,K)$.

In order to use meta-programming, the program $\Pi$ is transformed replacing all atoms $\mathit{A}$ with $\mathit{holds(A)}$.
A weighted defeasible implication $(\tip(A_i) \sqsubseteq B_{i,j}, h_{i,j})$ in $K$ is represented with a fact
$\mathit{wdi(A_i,B_{i,j},h_{i,j})}$.
Then, the weight with respect to an atomic formula $A$ can be computed using
$ \mathit{ \#sum \{ W1,B : wdi(A,B,W1),holds(B) \}. 
}$

For the case of conditional implications $\tip(A)  \rightarrow B$,
where $A, B$ are boolean formulae,
a solution can be adapted from \cite{JLC_SI_CILC2023}
(where an ASP encoding of entailment in a defeasible description logic with typicality was considered in the many-valued case).
The implication $\tip(A)  \rightarrow B$ to be verified is represented as
$\mathit{query(typ(A),B)}$, using, in $\mathit{A}$ and $\mathit{B}$, symbols $\mathit{and/2,or/2,neg/1}$ for representing the structure of the formulae.
The following constraint (with additional rules to extend $\mathit{holds}$ to non-atomic formulae):
\begin{quote}
    $ \mathit{ \text{:--} \ query(typ(A),B), not \ holds(A). } $
\end{quote}

\noindent
is used to select answer sets of $\Pi$ in which $A$ holds.
If there are none, the implication is trivially true.
The following weak constraint is used to prefer (at priority level 2) answer sets of $\Pi$ where the weight of $\mathit{A}$ is maximum, i.e., the answer sets in
$Min_{<_{A}}([[A]]^{{\emme^\Pi_K}})$:
\begin{quote}
    $ \mathit{ :\sim query(typ(A),B), weight(A,W). \ [-W@2] } $
\end{quote}

\noindent
The following rules and weak constraint:
\begin{quote}
    
$ \mathit{
rhs \ \text{:--} \ query(\_,B), holds(B).
}$

$ \mathit{
counterexample \ \text{:--} \ not \ rhs.
}$

$ \mathit{
:\sim counterexample. \ [-1@1]
}$
\end{quote}
are used to prefer, at level 1, the presence of 
$\mathit{counterexample}$: in this way, if there is an answer set of $\Pi$ which falsifies the implication $\tip(A)  \rightarrow B$, we get
$\mathit{counterexample}$ in all answer sets that are optimal with respect to the weak constraints; in none, otherwise. Then, a single ASP solver call computing a single optimal answer set is enough to establish entailment of the implication.

An example formula that is entailed
for the conditional ASP program in section
\ref{sec:condASP}
is:
\begin{quote}
$\tip(happy\_Sat(mary)) \ri go\_to\_cinema(mary)$
\end{quote}

General implications $A \rightarrow B$ with $\tip$ occurring in $A$ and $B$ can be dealt with as follows.
A solver call is performed for each occurrence $\tip(C)$ of $\tip$ in $A, B$, to establish, using a weak constraint, the maximum value of 
$W_{C}(S)$ across the answer sets $S$ of $\Pi$.
Then, a further solver call is performed to see if there is an answer set of $\Pi$ which is a counterexample for 
$A \rightarrow B$, evaluating the truth of formulae $\tip(C)$ with:
\begin{quote}
$ \mathit{
holds(typ(C)) \ \text{:--} \ holds(C),maxw(C,MW),weight(C,MW).
}$
\end{quote}

\noindent
where $\mathit{maxw}$ is used to represent the maximum weight computed before. This encoding provides a ${P^{NP}}$ upper bound to the complexity of entailment \cite{DBLP:conf/lpnmr/BuccafurriLR97}.

Some implications that result to be entailed
by the conditional ASP program in Section
\ref{sec:condASP}
are:
\begin{quote}

$\tip(happy\_Sat(mary)) \ri \neg \tip(happy\_Sat(bob))$

$\tip(happy\_Sat(bob)) \ri \neg \tip(happy\_Sat(mary))$

$\tip(happy\_Sat(mary) \wedge happy\_Sat(bob)) \ri \neg \tip(happy\_Sat(mary))$

$\tip(happy\_Sat(mary) \wedge happy\_Sat(bob))) \ri \tip(happy\_Sat(bob))$
\end{quote}
since Mary's and Bob's top preferences are different, and the case that maximizes weight for $happy\_Sat(mary) \wedge happy\_Sat(bob)$ is the one where they both go for pizza, which is Bob's top preference but not Mary's.
 
The two algorithms described above are implemented in dedicated reasoners based on \emph{ASP Chef}~\cite{AlvianoKR24}, a web-based platform for answer set programming. 
ASP Chef allows users to compose and execute declarative recipes, combining ASP programs with additional logic-based components, such as conditional knowledge bases.
The first algorithm, for queries of the form $\tip(A) \rightarrow B$, is available online at
\url{https://asp-chef.alviano.net/s/ICLP2025/conditional-asp-simple}.
The second one handles the general case and is accessible at
\url{https://asp-chef.alviano.net/s/ICLP2025/conditional-asp}.
In both implementations, users can specify a \emph{query}, a \emph{logic program}, and a \emph{weighted conditional knowledge base} through an intuitive web interface. 
The system elaborates the inputs by translating them into an ASP-based reasoning process, and computes the result directly in the browser. 
This makes the tool accessible without requiring any software installation, thus supporting both experimentation and education in conditional reasoning with ASP.

\section{Conclusions}
\label{sec:conclusions}

In this paper we have developed a framework for conditional reasoning from a Conditional ASP program, 
combining an ASP program $\Pi$ with a set $K$ of (weighted or ranked) conditionals over the same set of propositional variables.
The framework allows for conditional reasoning over the answer sets of the program $\Pi$, by verifying implications including the typicality operator.
We have introduced a two-valued multi-preferential semantics generalizing the KLM semantics \cite{KrausLehmannMagidor:90}
and, for weighted knowledge bases, we have developed an approach for combining preferences. The paper also provides an ASP approach for verifying entailment from a CondASP program, and corresponding reasoners in {ASP} Chef.

The idea of associating weights/ranks to the properties of concepts in weighted KBs, as a measure of their saliency, was inspired by Brewka's framework for basic preference descriptions \cite{Brewka89,Brewka04}, and by Lehmann's lexicographic closure \cite{Lehmann95}. 
Among the recent work on preference combination in ASP, let us mention  the algebraic framework for preference combination in multi-relational contextual hierarchies proposed by Bozzato et al. \cite{BozzatoEK21}, and the hybrid approach for combining an ASP program with a conditional knowledge base, proposed by Wilhelm et al. \cite{Wilhelm23} for prioritizing answer sets based on conditional expert knowledge: c-representations \cite{Kern-Isberner01} are used for ranking the feasible warehouse layouts generated by an ASP program in the logistic domain.

Our aim in this paper was to develop an integrated formalism for conditional reasoning over an extended ASP program, and to provide an ASP encoding of the formalism.
Our use of weighted knowledge bases has also some relations with 
{\em threshold concepts} in description logics by Baader et al. \cite{BaaderBGFrocos2015} and with 
{\em weighted threshold operators} by Porello et al. \cite{Porello2019}. 
The notion of typicality we have considered is also reminiscent of {\em prototype theory} \cite{Rosch1973} and  relates to Freund's ordered models  for concept representation \cite{Freund2020} although, here, we do not have a representation of concepts, but we are  considering propositions.

\medskip
{\bf Acknowledgements:} 
This research was partially supported by INDAM-GNCS.
Mario Alviano was partially supported 
by the Italian Ministry of University and Research (MUR) 
    under PRIN project PRODE ``Probabilistic declarative process mining'', CUP H53D23003420006,
    under PNRR project FAIR ``Future AI Research'', CUP H23C22000860006,
    under PNRR project Tech4You ``Technologies for climate change adaptation and quality of life improvement'', CUP H23C22000370006, and~    under PNRR project SERICS ``SEcurity and RIghts in the CyberSpace'', CUP H73C22000880001;
by the Italian Ministry of Health (MSAL)
    under POS projects CAL.HUB.RIA (CUP H53C22000800006) and RADIOAMICA (CUP H53C22000650006);
by the Italian Ministry of Enterprises and Made in Italy
    under project STROKE 5.0 (CUP B29J23000430005);
    under PN RIC project ASVIN ``Assistente Virtuale Intelligente di Negozio'' (CUP B29J24000200005);
and by the LAIA lab (part of the SILA labs). 
Mario Alviano is member of Gruppo Nazionale Calcolo Scientifico-Istituto Nazionale di Alta Matematica (GNCS-INdAM).


\end{document}